\documentclass[a4paper]{article}

\usepackage{hyperref}
\usepackage{url}
\usepackage{fancyvrb}
\usepackage{amsmath}
\usepackage{amssymb}
\usepackage{amsthm}
\usepackage[ruled,vlined]{algorithm2e}
\usepackage{graphicx}
\usepackage{svg}
\usepackage{verbatim}
\usepackage{natbib}
\usepackage[inline]{enumitem}
\usepackage{subcaption}
\usepackage{booktabs}
\usepackage{multirow}
\usepackage{hhline}

\graphicspath{ {./images/} }

\theoremstyle{plain} 
\newtheorem{proposition}{Proposition}
\newtheorem{cor}[proposition]{Corollary}
\newtheorem{lemma}[proposition]{Lemma}

\title{Attention-Based Clustering:\\ Learning a Kernel from Context}
\date{}
\author{Samuel Coward, Erik Visse-Martindale, Chithrupa Ramesh\\Zuken Ltd\thanks{
Zuken Limited, 1500 Aztec West, Almondsbury, Bristol, BS32 4RF, United Kingdom\newline
\url{https://www.zuken.com/en/}\newline
\texttt{\{sam.coward,erik.visse-martindale,chithrupa.ramesh\}@uk.zuken.com}
}}

\begin{document}
\VerbatimFootnotes

\maketitle

\begin{abstract}
In machine learning, no data point stands alone. We believe that context is an underappreciated concept in many machine learning methods. We propose Attention-Based Clustering (ABC), a neural architecture based on the attention mechanism, which is designed to learn latent representations that adapt to context within an input set, and which is inherently agnostic to input sizes and number of clusters. By learning a similarity kernel, our method directly combines with any out-of-the-box kernel-based clustering approach. We present competitive results for clustering Omniglot characters and include analytical evidence of the effectiveness of an attention-based approach for clustering. 
\end{abstract}

\section{Introduction}

Many problems in machine learning involve modelling the relations between elements of a set. A notable example, and the focus of this paper, is clustering, in which the elements are grouped according to some shared properties. A common approach uses kernel methods: a class of algorithms that operate on pairwise similarities, which are obtained by evaluating a specific kernel function~\citep{FCMR2008kernelMethodsSurvey}. However, for data points that are not trivially comparable, specifying the kernel function is not straightforward.

With the advent of deep learning, this gave rise to metric learning frameworks where a parameterized binary operator, either explicitly or implicitly, is taught from examples how to measure the distance between two points~\citep{koch2015siamese, ZK2015siameseLike, hsu2018learning, wojke2018deep, hsu2019multiclass}. These cases operate on the assumption that there exists a global metric, that is, the distance between points depends solely on the two operands. This assumption disregards situations where the underlying metric is contextual, by which we mean that the distance between two data points may depend on some structure of the entire dataset.

We hypothesize that the context provided by a set of data points can be helpful in measuring the distance between any two data points in the set. As an example of where context might help, consider the task of clustering characters that belong to the same language. There are languages, like Latin and Greek, that share certain characters, for example the Latin T and the Greek upper case $\tau$.\footnote{To the extend that there is not even a LaTeX command \Verb|\Tau|} However, given two sentences, one from the Aeneid and one from the Odyssey, we should have less trouble clustering the same character in both languages correctly due to the context, even when ignoring any structure or meaning derived from the sentences themselves. Indeed, a human performing this task will not need to rely on prior knowledge of the stories of Aeneas or Odysseus, nor on literacy in Latin or Ancient Greek. As a larger principle, it is well recognized that humans perceive emergent properties in configurations of objects, as documented in the Gestalt Laws of Perceptual Organization~\citep[Chapter 2]{palmer1999vision}. 

We introduce Attention-Based Clustering (ABC) which uses context to output pairwise similarities between the data points in the input set. Our model is trained with ground-truth labels and can be used with an unsupervised clustering method to obtain cluster labels. To demonstrate the benefit of using ABC over pairwise metric learning methods, we propose a clustering problem that requires the use of properties emerging from the entire input set in order to be solved. The task is to cluster a set of points that lie on a number of intersecting circles, which is a generalization of the Olympic circles problem~\citep{AnandEtAl:2014}. Pairwise kernel methods for clustering perform poorly on the circles problem, whereas our ABC handles it with ease, as displayed in Figure~\ref{fig:circles}. We use the circles dataset for an ablation study in Section~\ref{subsec:experiments_circles}.

\begin{figure}
    \includegraphics[width=\linewidth]{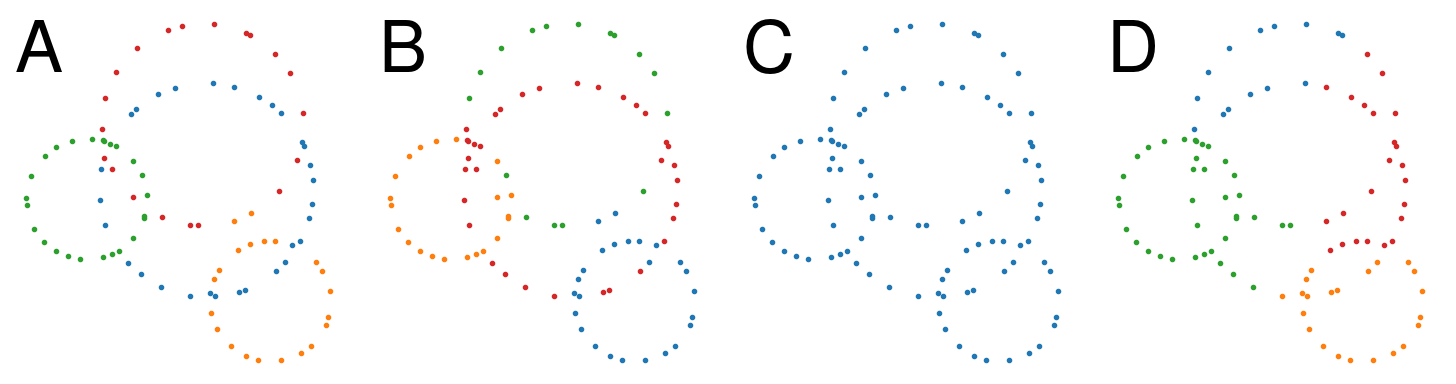}
    \caption{Illustration of the output of different clustering methods for points sampled from four overlapping circles. (A) ABC with additive attention. (B) ABC with multiplicative attention. (C) Pairwise similarity with additive attention. Pairwise similarity with multiplicative attention performed similarly. (D) Out-of-the box spectral clustering. Only D was given the true number of clusters. (Best viewed in colour.)}
    \label{fig:circles}
\end{figure}

In recent years, numerous deep neural network architectures have been proposed for clustering~\citep{xieb16:dec, MinEtAl2018deepClusteringSurvey}. The idea of using more than pairwise interactions between elements of an input set in order to improve clustering has been pursued recently in~\citet{lee2019set, lee2019deep}, and is motivated by the problem of amortized clustering~\citep{Gershman2014AmortizedInf, Stuhlmuller2013AmortisedInf}. Our architecture is inspired by the Transformer~\citep{vaswani2017attention}, which was used by~\citet{lee2019set} as the Set Transformer to improve clustering~\citep{lee2019deep}.  We inherit its benefits such as being equivariant under permutations as well as agnostic to input size. However, our approach is motivated by the use of context to improve metric learning, giving us a model that is moreover agnostic to the number of clusters. We also provide theoretical evidence that the Transformer architecture is effective for metric learning and clustering, and to our knowledge, are the first to do so.  

The idea of using deep metric learning to improve clustering has been pursued in~\citet{koch2015siamese, ZK2015siameseLike, hsu2018learning, hsu2019multiclass, HanEtAl:2019}, but without considering the use of context. We use ground truth labels, only in the form of pairwise constraints, to train a similarity kernel, making our approach an example of constrained clustering. These algorithms are often categorized by whether they use the constraints to only learn a metric or to also generate cluster labels~\citep{hsu2018learning}. Our architecture belongs to the former category, where we only use the constraints to learn a metric and rely on an unconstrained clustering process to obtain cluster labels. Despite this, we achieve nearly state-of-the-art clustering results on the Omniglot dataset, comparable to sophisticated methods that synthesize clusters, either using the constraints~\citep{hsu2018learning, hsu2019multiclass, HanEtAl:2019} or otherwise~\citep{lee2019set, lee2019deep}.

Our main contributions are:
\begin{itemize}
    \item ABC incorporates context in a general and flexible manner to improve metric learning for clustering. Our competitive results on Omniglot and our ablation study on our circles dataset provide support for the use of context in metric learning algorithms.
    \item We provide theoretical evidence of why the self-attention module in the Transformer architecture is well suited for clustering, justifying its effectiveness for this task.
\end{itemize} 

This paper is organized as follows: in Section~\ref{sec:background}, we present some recent work upon which we build our architecture, which is presented in Section~\ref{sec:architecture}. In Section~\ref{sec:analysis}, we look into some theoretical aspects of ABC, and present experimental results in Section~\ref{sec:experiments}. Then we compare against related works in Section~\ref{sec:related_works}, and we end with a discussion of our architecture in Section~\ref{sec:discussion}.

\section{Background}\label{sec:background}
Taking inspiration from kernel methods, we aim to compute a similarity matrix from a sequence of data points. Our architecture is inspired by ideas from two streams: the metric learning literature and the Siamese network~\citep{koch2015siamese} on how to learn compatibility scores, and the Transformer architecture~\citep{vaswani2017attention} and the Set Transformer~\citep{lee2019set} on how to use context to make decisions. We discuss a few concepts from the literature which will form building blocks of our architecture in the next section.

\subsection{Compatibility}

In this section we introduce some compatibility functions which compute a similarity score between two vector arguments, called the \textit{query} and \textit{key} respectively. We present the forms of compatibility used in this paper in Table~\ref{compats} and for both of these forms, keys and queries are required to have equal dimension $d$.
\begin{table}[ht]
    \caption{Possible implementations of the compatibility function. \texttt{act} is any element wise activation function, such as \texttt{tanh} or \texttt{sigmoid}.}
    \centering
    \begin{tabular}{|l|l|c|r|}
        \hline
        Form & Parameters & Expression & Reference \\
        \hline \hline
        Multiplicative & None & $\mathbf{q}^\intercal\mathbf{k}/\sqrt{d}$ & \citep{vaswani2017attention} \\\hline
        Additive & $\mathbf{w} \in \mathbb{R}^H$ & $\texttt{act}(\mathbf{q} + \mathbf{k})^\intercal \mathbf{w}$ & \citep{DBLP:journals/corr/BahdanauCB14} \\\hline
    \end{tabular}
    \label{compats}
\end{table}

In Siamese Networks~\citep{koch2015siamese}, compatibility between two input images is measured by the sigmoid of a weighted L1-distance between representations of the input. This can be seen as a special case of additive compatibility above. The Transformer~\citep{vaswani2017attention} and Set Transformer~\citep{lee2019set, lee2019deep} make use of multiplicative compatibility. 

\subsection{The Transformer}
The attention mechanism forms the core of the Transformer architecture, and generates contextually weighted convex combinations of vectors. The elements included in this combination are called values and the weights are provided via compatibilities between queries and keys as in the previous section.

Suppose we have a length $m$ sequence of query vectors and a length $n$ sequence of key-value pairs. We denote the the dimensionality of each query, key and value vector by $d_q$, $d_k$, and $d_v$ respectively. In matrix form, these are expressed as $Q \in \mathbb{R}^{m \times d_q}$ for the queries, $K \in \mathbb{R}^{n \times d_k}$ for the keys, and $V \in \mathbb{R}^{n \times d_v}$ for the values. The attention function $\texttt{Att}$ with softmax activation is given as
\begin{displaymath}
\begin{aligned}
    \texttt{Att}(Q, K, V) &= AV,\\
    \text{with } A_{i,j} &= \frac{\exp(C_{i,j})}{\sum_{k=1}^{n}\exp(C_{i,k})} \quad \text{(i.e. row wise softmax)},\\
    \text{for } C &= \texttt{compat}(Q, K) \in \mathbb{R}^{m \times n}.
\end{aligned} 
\end{displaymath}
The result is a new encoded sequence of length $m$. We use the terms additive or multiplicative attention to specify the compatibility function that a particular form of attention uses.

Multi-head Attention ($\texttt{MHA}$)~\citep{vaswani2017attention} extends the standard attention mechanism to employ multiple representations of the data in parallel.
Each query vector computes $h$ separate convex combinations over the value vectors as opposed to a single combination. The concatenation of the $h$ combinations are projected to a single vector again representing an encoding of each query with respect to the entire sequence of key-value pairs. The intuition is that each head can attend to different properties of the terms of the key-value sequence. This is functionally expressed as
\begin{displaymath}
\begin{aligned}
    \texttt{MHA}(Q, K, V) &= \texttt{concat}(O_1,\ldots,O_h)W_O,\\
    \text{with } O_j &= \texttt{Att}(QW^{(q)}_j, KW^{(k)}_j, VW^{(v)}_j) ,\,\, \text{ for } j = 1, \ldots, h.
\end{aligned}
\end{displaymath}
This formulation introduces parameters $W_O \in \mathbb{R}^{hd_v' \times d}$ and  $W^{(x)} \in \mathbb{R}^{h \times d_x \times d_x'}$, for each $x \in \{q, k, v\}$, where $d_x'$ is the desired projection length chosen as a hyper-parameter, and which is typically set to $d_x' = d_x / h$.

Another innovation in~\citet{vaswani2017attention} was the introduction of a skip connection, layer normalisation and a fully connected layer. The result is referred to as the the Multi-head Attention Block ($\texttt{MAB}$) by \citet{lee2019set}, and given by
\begin{align}
    \texttt{MAB}(Q, K, V) &= \texttt{LayerNorm}(H + \texttt{FF}(H)),\label{eq:second_skip_connection} \\
    \text{with } H &= \texttt{LayerNorm}(Q + \texttt{MHA}(Q, K, V)),\label{eq:first_skip_connection}
\end{align}
where $\texttt{FF}$ is a feed-forward layer operating element wise, and \texttt{LayerNorm} is layer normalisation~\citep{ba2016layer}.

For our purposes we will only need a special case of the MAB where the queries, keys, and values are all equal. \citet{lee2019set} denote the special case as $\texttt{SAB}(X)=\texttt{MAB}(X,X,X)$ and we will follow that notation.

\section{Architecture}\label{sec:architecture}
The ABC architecture is a composition of previously introduced components. In the most general case, ABC expects a variable-sized set of elements as input, where each element is represented by a fixed-sized feature vector. From this, ABC outputs a square matrix of the similarity scores between all pairs of elements in the input.

A note on terminology: some literature uses the word \textit{mini-batch} to mean a single input set whose elements are to be clustered. To avoid confusion with the concept of mini-batches used in training a neural network, from now on we opt to reserve the terminology \textit{input instance} instead. 

\subsection{Abstract definition}

Let $d_x$ be the dimensionality of input elements and $d_z$ be the desired number of latent features, chosen as a hyper-parameter. ABC consists of two sequential components: 
\begin{enumerate}
    \item \textbf{Embedding:} A function $\mathcal{T}$ mapping an any length sequence of elements in $\mathbb{R}^{d_x}$ to a same-length sequence of encoded elements in $\mathbb{R}^{d_z}$, or in tensor notation: for any $n\in\mathbb{N}$ we have $\mathcal{T}:\mathbb{R}^{n\times d_x}\to\mathbb{R}^{n\times d_z}$.
    \item \textbf{Similarity:} A kernel function $\kappa: \mathbb{R}^{d_z} \times \mathbb{R}^{d_z} \rightarrow \mathbb{R}$,
\end{enumerate}
such that for $X \in \mathbb{R}^{n \times d_x}$ the output is an $n\times n$-matrix. Explicitly, composing these parts gives us for any $n\in\mathbb{N}$ a function $\textrm{ABC} : \mathbb{R}^{n \times d_x} \rightarrow \mathbb{R}^{n \times n}$ with
\begin{displaymath}
\textrm{ABC}(X)_{i,j} = \kappa(\mathcal{T}(X)_i, \mathcal{T}(X)_j).
\end{displaymath}

\subsection{Explicit embedding and similarity}

We construct the embedding layer by composing a fixed number of SABs:
\begin{displaymath}
    \mathcal{T}(X) = (\texttt{SAB}_1 \circ \cdots \circ \texttt{SAB}_N)(X)
\end{displaymath}

and we rely on the embedding stage to capture the relevant information related to all terms of the input instance and encode that within every term of its output. As such, computing the similarity can simply be performed pairwise. We now make the choice to constrain the output of the similarity function $\kappa$ to lie in the unit interval. Our choice for the symmetric similarity component is
\begin{align*}
    \kappa(\mathbf{z}_i, \mathbf{z}_j) = \frac{1}{2} \left[ \texttt{sigmoid}(\texttt{compat}(\mathbf{z}_i, \mathbf{z}_j)) + \texttt{sigmoid}(\texttt{compat}(\mathbf{z}_j, \mathbf{z}_i)) \right],
\end{align*}
where $\mathbf{z}_i$ is the $i$th term of the encoded sequence.

\subsection{Loss function and training}

Given a labelled input instance comprised of a collection of elements and corresponding cluster labels, we train ABC in a supervised manner using a binary ground-truth matrix indicating same-cluster membership. Each cell of the output matrix can be interpreted as the probability that two elements are members of the same cluster. The loss is given as the mean binary cross entropy (BCE) of each cell of the output matrix.

\subsection{Supervised kernel to unsupervised clustering} 

ABC learns a mapping directly from an input instance to a kernel matrix. We pass this matrix in to an off-the-shelf kernel-based clustering method, such as spectral clustering, to obtain the cluster labels.

What remains is to specify the number of clusters present in the predicted kernel. Depending on the use-case this can be supplied by the user or inferred from the kernel matrix by using the eigengap method~\citep{von2007tutorial}. Let $A$ be the symmetric kernel matrix. The number of clusters inferred from this matrix is 
\begin{displaymath}
    \textrm{NumClusters}(A) = \textrm{argmax}_{i\in\{1,\ldots,n\}} \{ \lambda_i - \lambda_{i+1} \},
\end{displaymath}
where $\lambda_i$ is the $i$th largest eigenvalue of the normalized Laplacian $$L = I -  D^{-\frac{1}{2}}AD^{-\frac{1}{2}},$$ and where $D$ is the diagonal degree matrix of $A$.

\section{Analysis}\label{sec:analysis}

In this section we discuss some theoretical properties of the architecture. We focus on the role of attention and the effects of skip-connections~\citep{resnet}. In particular, we show how these elements are able to separate clusters from other clusters, making it easier for the similarity block of ABC to learn pairwise similarity scores based on the context given by the entire input instance.

We consider a simplified version of the SAB using just a single attention head. It is not difficult to prove that attention with any compatibility function maps a set of vectors into its convex hull, and that the diameter of the image is strictly smaller than the diameter of the original (see Appendix~\ref{subsec:appendix_attention_dyn_sys} for details). This leads repeated application to blur the input data too much to extract relevant features. This behaviour is also noticed in~\citet{bello2016neural} and is counteracted in the Transformer by the use of skip-connections. Reports showing that skip-connections play a role in preserving the scale of the output in feed-forward networks can for example be found in~\citet{pmlr-v70-balduzzi17b, zaeemzadeh2018normpreservation}, and we include a short discussion on the same effect in our setting in Appendix~\ref{subsec:appendix_skip_connections}.
We note that the remaining parts of the Multi-Head attention block as described in equations~\eqref{eq:second_skip_connection} and~\eqref{eq:first_skip_connection}, i.e. the layer normalizations and the element wise feed-forward layer, are of a `global' nature, by which we mean that they do not depend on different elements in the input instance. These parts merely support the functionality of the network along more general deep learning terms and they do not form an interesting component to this particular analysis.

The counterbalanced contraction discussed above holds for the entire dataset as a whole, but more structure can be uncovered that motivates the use of the set encoder in our architecture. Somewhat informally we may state it as the following, of which the formal statement and proof are treated in Appendix~\ref{subsec:appendix_formal_example}.

\begin{proposition}\label{prop:two_clusters_informally}
Assume we are given a set of points that falls apart into two subsets $A$ and $B$, where the pairwise compatibility weights within each of $A$ and $B$ are larger than the pairwise weights between $A$ and $B$. Under repeated application of SABs and under some symmetry conditions, the two subsets become increasingly separated.
\end{proposition}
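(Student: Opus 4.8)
The plan is to treat the repeated application of \texttt{SAB}s as a discrete dynamical system on the configuration of points and to track two competing scales: the spread within each cluster and the separation between the two clusters. First I would strip the \texttt{SAB} down to its essential part, namely the attention update together with its skip connection, since---as noted in the preceding discussion---the layer normalizations and the element-wise feed-forward map act globally and do not distinguish points within an instance. Writing the stacked embeddings as a matrix $X\in\mathbb{R}^{n\times d_z}$, the attention step is left multiplication $X\mapsto PX$ by a row-stochastic matrix $P$, so with the skip connection one iteration is $X\mapsto (I+P)X$. The crucial simplification is that $(I+P)$ acts identically on every coordinate column of $X$, so the entire analysis reduces to understanding the single $n\times n$ matrix $M=I+P$ and its invariant subspaces.

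Next I would make the symmetry conditions explicit so that $P$ becomes block structured: assume the two subsets $A$ and $B$ have equal size and that the compatibility scores take a common value $s$ within each subset and a common value $t$ between subsets, with $s>t$ encoding the hypothesis that within-cluster weights dominate. The row-wise softmax then yields a matrix whose every row places total weight $\alpha$ on the point's own subset and $\beta=1-\alpha$ on the other, with $\alpha>\beta$. I would then decompose $\mathbb{R}^n$ into three $M$-invariant pieces: the global mean direction $\mathbf{1}$, the between-cluster contrast direction (block-constant, with opposite signs on $A$ and $B$), and the within-cluster deviations (zero-sum on each block). A direct computation gives the eigenvalues of $M$ on these subspaces as $2$, $1+(\alpha-\beta)$, and $1$, respectively. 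The geometric reading is immediate: the gap between the two centroids is carried by the contrast mode and grows by the factor $1+(\alpha-\beta)>1$ at each layer, whereas the diameter of each subset is carried by the deviation modes and is exactly preserved; the common mean-growth factor cancels once separation is measured relatively, e.g.\ after centering or under layer normalization. Hence after $N$ iterations the inter-cluster separation scales like $(1+(\alpha-\beta))^N$ while the intra-cluster spread stays fixed, so the convex hulls of the two images become disjoint and their gap diverges. This also makes transparent the role of the skip connection: without the $I$ term the contrast mode would instead contract by $\alpha-\beta<1$ and the deviations would collapse, reproducing the blurring of the earlier contraction result rather than separation.

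The main obstacle is to move beyond this idealized symmetric configuration. When the within-cluster scores are merely larger than the between-cluster scores rather than all equal, $P$ loses its exact block structure and the three subspaces above are only approximately invariant; I would control this by bounding the off-diagonal blocks in terms of $\beta$ and showing, via a perturbation or Gershgorin-type argument, that the contrast mode still expands while the within-cluster modes cannot expand faster, so the separating ratio still grows. The subtler difficulty is that the compatibility scores are recomputed from the updated embeddings at every layer, so $s$ and $t$---and hence $\alpha,\beta$---change from step to step; to justify the phrase \emph{increasingly separated} across many layers I would need a self-consistency argument showing that the favorable ordering $s>t$ is preserved, and ideally reinforced, under iteration. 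Establishing that this feedback between the evolving geometry and the recomputed attention weights does not degrade the block structure is the crux of the argument; the linear-algebraic eigenvalue computation, by contrast, is routine once the symmetric model is in place.
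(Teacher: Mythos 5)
Your proposal is correct and its core is the same as the paper's: reduce the SAB to attention-plus-skip-connection, impose block symmetry on the row-stochastic weight matrix, and compare the one-step growth rate of the within-cluster spread against that of the inter-centroid separation. The packaging differs in an instructive way. The paper (Appendix B, Proposition \ref{prop:two_clusters} and Corollary \ref{cor:fast_and_slow}) computes the differences $x_{i,t+1}-x_{j,t+1}$ and $c_{1,t+1}-c_{2,t+1}$ directly, in a more general setting: unequal cluster sizes $n,m$, distinct within-cluster weights $\alpha_t\neq\beta_t$, and crucially a separate diagonal (self-attention) weight $\delta_t>\max\{\alpha_t,\beta_t\}$. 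Your spectral decomposition into the mean, contrast, and deviation subspaces is cleaner and makes the role of the skip connection transparent, but because you fold the diagonal weight into the within-block value, your deviation modes get eigenvalue exactly $1$ (diameters frozen), whereas in the paper's model each cluster's diameter grows by $1+\delta_t-\alpha_t>1$ --- only more slowly than the centroid gap, whose rate $2-(n+m)\gamma_t$ equals your $1+(\alpha-\beta)$ in the symmetric equal-size case. So the conclusion the paper actually proves is relative (the ratio of rates is below $1$), not the absolute ``diameter stays fixed'' picture your special case suggests. Finally, what you identify as the crux --- that the weights are recomputed from the evolved embeddings at each layer, so the ordering $s>t$ must be shown to persist --- is not resolved by the paper either: the symmetry conditions and the orderings $\delta_t>\max\{\alpha_t,\beta_t\}$ and $\gamma_t<\min\{\alpha_t,\beta_t\}$ are simply \emph{assumed} at every time step $t$ (the weights are allowed to vary with $t$ but must retain the block structure). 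Your proposed perturbation and self-consistency arguments would genuinely strengthen the result, but they are not needed to match what the paper claims and proves.
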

\citet{AnandEtAl:2014} use a similar idea to devise a transformation for their kernel. A linear transformation is designed to bring pairs of points from a cluster closer together and to push pairs of points from different clusters apart, by iterating over all labelled pairs. The Transformer architecture accomplishes this without the restriction of linearity and without the need for iteration over points in an input instance due to an amortization of the clustering process.

\section{Experiments}\label{sec:experiments}
We conduct two experiments to validate the feasibility of our architecture and to evaluate the claim that context helps learn good similarity output. We give details on how we sample training instances in Appendix~\ref{sec:appendix_sampling}.

\subsection{Toy Problem: Points on a circle}\label{subsec:experiments_circles}

To generalize the phenomenon of real-world datasets intersecting, such as characters in multiple languages, as well as to illustrate the necessity for context during some clustering tasks, we devise the following toy problem. Given a fixed-length sequence of points, where each point lies on four likely overlapping circles, cluster points according to the circle they lie on. As we will demonstrate, only considering the pairwise similarities between points is insufficient to solve this problem, but our architecture does give a satisfactory solution. 

We try two variants of ABC, one with additive attention and the other with multiplicative attention. As an ablation study, we compare against a generic pairwise metric learning method as well as out-of-the-box spectral clustering. For the pairwise metric learning method, we remove the embedding block and use only the similarity block. By comparing with spectral clustering, we show the improvement that our architecture brings.

In Figure~\ref{fig:circles2}, we present the adjusted Rand score of all these clustering methods for different values of input instance length. Notice that the pairwise method performs poorly, in fact worse than out-of-the-box spectral clustering. The multiplicative and additive variants of ABC far outperform the other two methods on the circles problem, thus validating our use of context in learning a metric.

\begin{figure}[ht]
    \centering
    \includegraphics[width=0.75\linewidth]{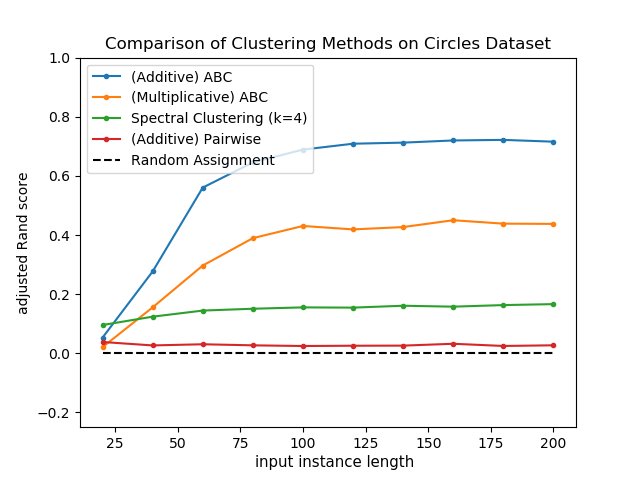}
    \captionof{figure}{Comparative performance on the circles problem of ABC with either additive or multiplicative attention, as well as ablated versions of the ABC architecture. The horizontal axis shows the number of points sampled from the combined circles. The vertical axis shows the Rand score adjusted so that random assignment gives a score of 0. The big gap in performance between pairwise and spectral clustering on the one hand and the two versions of ABC on the other shows the benefit that context brings.}
    \label{fig:circles2}
\end{figure}

\subsection{Omniglot Clustering}
The Omniglot training dataset~\citep{omniglot_dataset} consists of images of characters from the alphabets of 30 languages, with another 20 alphabets reserved for testing. Each alphabet has varying numbers of characters, each with 20 unique example images. This dataset was proposed to test model performance on one-shot learning tasks~\citep{LAKE2019omniglotReport}, where a model must learn from single examples of novel categories. We attempt clustering of images from novel classes within alphabets. We treat each character as a class such that an alphabet is a grouping of related classes.

Before attempting the above task, it is critical for a model to first learn how to learn from a limited collection of examples from novel categories. Doing so will teach the model how to extract general information that can be applied to categories never before seen during training. 

\begin{figure}
  \centering
  \includegraphics[width=0.75\linewidth]{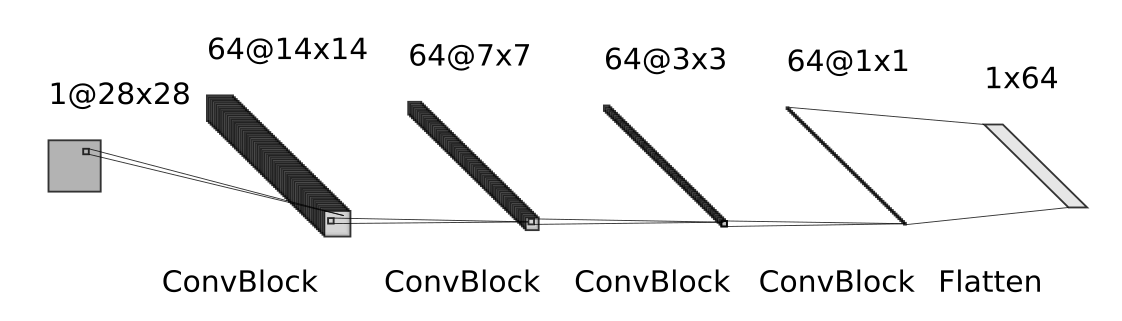}
  \caption{Illustration of the architecture used to encode each character image prior to computing the similarity matrix. Each ConvBlock performs in sequence: Conv2d with padding $1$ and kernel size $3$, batch normalization, ReLU, and max pooling with kernel size $2$.}
  \label{fig:enc}
\end{figure}

For training, each input instance consists of 100 within alphabet images, where the number of unique characters per input instance varies as much as permitted by the available data.
We use the CNN from~\citet{Vinyals2016_matchingNetworks} as the image embedding function. This module is illustrated in Figure~\ref{fig:enc}. 
Training is conducted using our implementation in PyTorch\footnote{Code will be available at \url{https://github.com/DramaCow/ABC}.} and uses the standard Adam optimizer. Details of the hyperparameters can be found in Appendix~\ref{subsec:appendix_experiment_hyperparams}.

\begin{figure}
    \centering
    \includegraphics[width=0.75\linewidth]{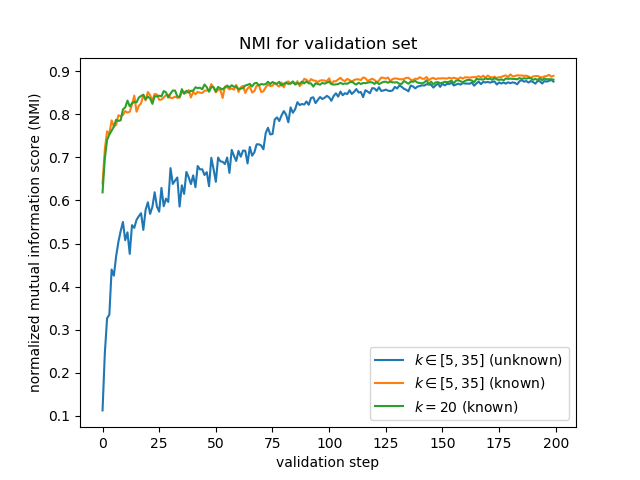}
    \captionof{figure}{Clustering performance on the test set for our three clustering tasks on Omniglot over the course of training. As the model improves, estimating the number of clusters becomes more accurate, and the disadvantage of not knowing the true number of clusters becomes negligible. }
    \label{fig:nmi}
\end{figure}

For testing, we use the 20 alphabets from the reserved lot in Omniglot, as a standalone dataset each. At test time, an instance of 100 images are presented to the model, assembled as a random number of elements chosen from a certain number of clusters as described below. We report clustering performance on three tasks with: 
\begin{enumerate*}[label=(\roman*)]
\item a variable number of clusters, unknown at inference, \item a variable number of clusters, known at inference, and \item a fixed number of clusters ($k=20$), known at inference. 
\end{enumerate*}
Note that training is independent of the task; at inference time, all tasks use the same trained model.

\begin{table}
    \centering
    \caption{ABC results on the three tasks outlined in this section. The scores displayed are the means over the 20 testing alphabets. The per-alphabet split can be found in Appendix~\ref{sec:appendix_omniglot_details}.}
    \label{tab:ABC_results}
    \begin{tabular}{|l|c|}
        \hline
        Task & NMI \\
        \hline
        \hline
         Variable unknown number of clusters & 0.874\\
         Variable known number of clusters & 0.893\\
         Fixed number of clusters ($k=20$) & 0.884\\
         \hline
    \end{tabular}
\end{table}

Our results show that ABC performs equally well on all three tasks, see Table~\ref{tab:ABC_results}. In particular, the Normalized Mutual Information score (NMI) obtained with an unknown number of clusters matches the values that are obtained when the number of clusters is known. Hence, after training the model to convergence, it is not necessary to know the true number of clusters to obtain good performance. 

In Table~\ref{tab:comparison}, we compare against previous results reported on this problem. In this table, there are two categories of clustering methods; the first four methods use supervised metric learning in combination with unsupervised clusterers, whereas the last four methods use the constraints to synthesize clusters, which adds to the model complexity. ABC belongs to the former category, but performs comparably to the latter category of clustering methods. Also notice that ABC with multiplicative compatibility outperforms the only other method that uses context, distinguished by the $\dagger$ symbol added to its name in Table~\ref{tab:comparison}. This validates our hypothesis that context can improve metric learning, and that using context can be valuable when working with real world data. 

\begin{table}
\centering
\caption{Comparative results on Omniglot. The table presents results for known and unknown number of clusters. Where the architecture relies on knowning a (maximum) number of clusters, such as KLC, that maximum is set to 100. The first four entries are copied from~\citet{hsu2018learning} as their methods are most relevant in comparison to ours. The table is split up as explained in the main text.}
\label{tab:comparison}
\begin{tabular}{|l|c|c|r|}
    \hline
    Method & NMI (known) & NMI (unk.) & Reference\\
    \hline
    \hline
    ITML & 0.674 &  0.727 &\citep{10.1145/1273496.1273523}\\
    SKMS & - & 0.693 &\citep{AnandEtAl:2014} \\
    SKKm & 0.770 & 0.781 &\citep{AnandEtAl:2014} \\
    SKLR & 0.791 & 0.760 &\citep{AmidEtAl:2016}\\
    \hline
    ABC (add. compat.)$^\dagger$ & 0.873 & 0.816 & (ours)\\
    ABC (mul. compat.)$^\dagger$ & 0.893 & 0.874 & (ours)\\
    \hline
    DAC$^\dagger$ & - & 0.829 & \citep{lee2019deep}\\
    KLC & 0.889 & 0.874 &\citep{hsu2018learning}\\
    MLC & 0.897 & 0.893 &\citep{hsu2019multiclass}\\
    DTC-$\Pi$ & 0.949 & 0.945 &\citep{HanEtAl:2019} \\
    \hline
\end{tabular}
\end{table}

\section{Related works}\label{sec:related_works}

Our method is similar to a line of research where a distance metric, rather than a similarity score, is learned in a supervised manner, which can then be used as input to off-the-shelf clustering methods, such as $K$-means~\citep{xing_ng_jordan_russell, 10.1145/1015330.1015376, 10.1145/1273496.1273523}. This line of work differs from ours in the sense that only a certain class of distances are learned\footnote{namely Mahalanobis distances} whereas our similarity scores are only restricted by the class of functions that our architecture is able to model. This is still an open research question because the class of functions that the Transformer can model has only partly been studied~\citep{DBLP:conf/iclr/YunBRRK20}.  

Deep neural nets have been 
used to learn a pairwise metric in numerous works~\citep{ZK2015siameseLike, hsu2018learning, wojke2018deep, hsu2019multiclass}, most notably in the Siamese network~\citep{koch2015siamese}. The idea of using contextual information has not been explored in any of these papers. 

Many models go further than metric learning by also learning how to synthesize clusters. An example of constrained clustering can be found in~\citet{AnandEtAl:2014}, where pairwise constraints are used to linearly transform a predefined kernel in an iterative manner, which is used in a kernel mean shift clustering algorithm. The kernel matrix needs to be updated iteratively for each constraint, making the algorithm difficult or even impossible to converge. An extension of this work to handle relative distances between pairs of data points can be found in~\citet{AmidEtAl:2016}.

Constrained clustering algorithms have been implemented using deep neural nets as well. In~\citet{hsu2018learning, hsu2019multiclass}, the authors train a similarity metric and transfer learning to a secondary clustering model. Both models are trained using only pairwise constraints, and any available context information remains unused in both components of their architecture. In~\citet{HanEtAl:2019}, a constrained clusterer inspired by the deep embedded clustering idea~\citep{xieb16:dec} is proposed, along with a number of best practices such as temporal ensembling and consistency constraints in the loss function. These techniques are fairly generic and can perhaps be applied to any other clustering algorithm to improve its results. Their model generates clusters by slowly annealing them, requiring optimization and back-propagation even at test time. The models from~\citet{hsu2018learning} and~\citet{hsu2019multiclass} also have this requirement. This may not be feasible during deployment.

The Set Transformer architecture~\citep{lee2019set} uses the Transformer as a contextual encoder, followed by a pooling layer that uses a fixed number of seed vectors as queries. This architecture is used to cluster a mixture of Gaussians, but is less flexible than ours for two reasons: it requires the number of clusters in advance in setting the number of seed vectors, and those seed vectors being learned makes their approach less adaptable to unseen classes. The first limitation is addressed in a follow-up paper~\citep{lee2019deep}. Our architecture, due to its use of metric learning in place of the pooling layer with learned seed vectors, is inductive and can handle new classes with ease. We also present a mathematical justification for the use of the Transformer in clustering applications. 
\section{Discussion}\label{sec:discussion}
It is perhaps unsurprising that the Transformer architecture performs well for clustering in addition to a number of other areas. The self-attention module in the Transformer architecture offers a unique advantage to neural networks: this module acts as a linear layer whose weights are determined by the compatibility scores of the queries and keys rather than a fixed set of learned values. This makes the self-attention module a nonparametric approximator~\citep{nonparametricStatisticsBook, Orbanz2010BayesianNonparametrics}, whose expressivity is far more than what might be expected by looking at the parameter reuse in the compatibility module~\citep{DBLP:conf/iclr/YunBRRK20}. 

The encoder in ABC can be seen to be balancing the two objectives of using context and learning from ground truth labels, in the manner in which it combines the multi-head attention term with a skip-connection. This sometimes gives rise to conflicts, as seen in the example in Figure~\ref{fig:omniglot_eg}. Here, the input instance consists of all the variations of the letter k. The predicted similarity matrix is far from the ground truth: a perceived mistake by the model. Upon closer look however, we can see that while each element represents the same character, each of them is written in a slightly different way. For this particular input instance, those small differences are precisely what makes up the relevant context, and the model is able to pick up on that. To accommodate for situations where the level of context should be balanced against the relevance of ground truth labels, one could imagine a modified version of the Transformer using weighted skip-connections as in Highway Networks~\citep{NIPS2015_5850}. The attention weighted average brings context into the prediction and the skip-connections carry through the information coming from individual data points. The extra weights would allow the model to learn when to focus on context and when to ignore it.

\begin{figure}[ht]
\begin{center}
    \includegraphics[width=0.75\linewidth]{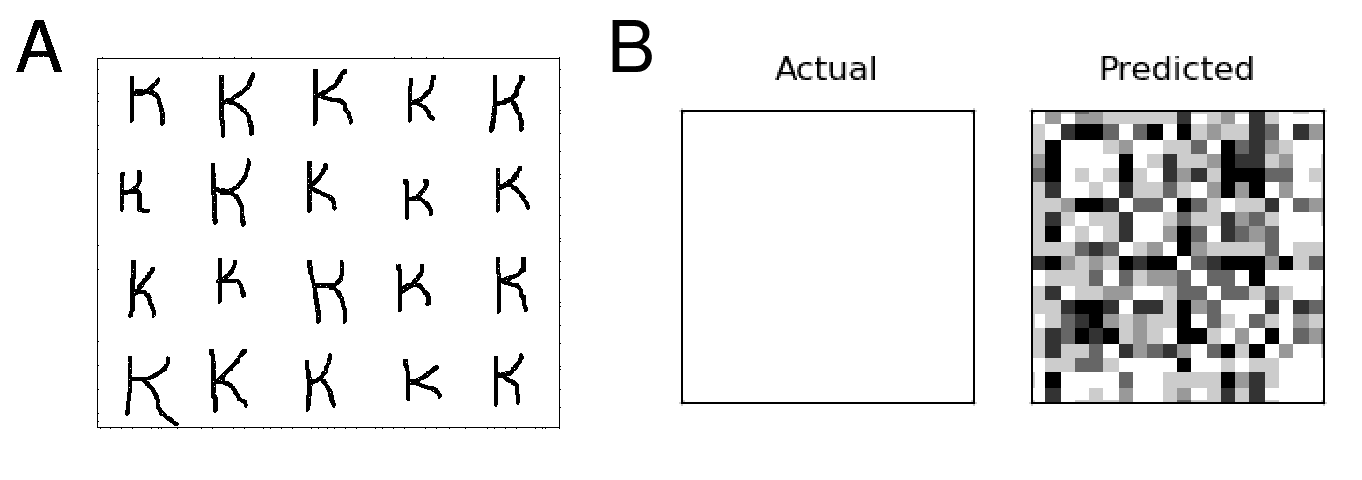}
    \caption{(A) Example input instance of characters all of the same class. (B) Ground-truth and predicted similarity matrices and their difference in greyscale, where white means a value of 1 and black a value of 0. ABC picks up on the small differences between each of the characters; this is precisely the context that this input instance provides.}
    \label{fig:omniglot_eg}
\end{center}
\end{figure}

In this paper we have only addressed a single interpretation of `context', one that is implicit. We could however be presented with a dataset in which \textit{explicit} context is available which may take the form of a weighted adjacency matrix rather than merely binary ground-truth cluster labels. This form of context is also naturally handled by a reimagining of the ABC architecture as a graph neural network~\citep{Scarselli2009GNNs, GNNsurvey2019}. We would replace the encoder stage with a graph attention network~\citep{velickovic2018graph} that incorporates weighted adjacency data. This generalizes the architecture of ABC, where the present architecture can be interpreted to act on a fully connected graph with unit weights.

So far, the use of constraints has been limited to learning a similarity kernel in ABC, in contrast to the approach taken in~\citet{hsu2018learning}. A hybrid approach where the similarities are learned instance wise, like in ABC, and then processed using a learned model which is robust to noise would be an interesting avenue for future research. We would also be interested to see how far we can push our method by including general good practices as in~\citet{HanEtAl:2019}.

\bibliography{refs}
\bibliographystyle{plainnat}

\newpage
\appendix
\section{More details on the Analysis}\label{sec:appendix_analysis}

\subsection{Attention as a dynamical system}\label{subsec:appendix_attention_dyn_sys}
This section deals with the equation
\begin{align}
x_{i, t+1} &= \sum_{j=1}^{n} w_{i,j,t} x_{j,t}, \label{eq:dyn_rep}\\
\textrm{with } w_{i,j,t} &= \textrm{softmax}((\textrm{compat}(x_{i,t}W^{(q)}, x_{\ell,t}W^{(k)}))_{\ell=1}^n)_j \nonumber
\end{align}
which is an abstraction of single headed self attention, i.e. \texttt{MHA} with one attention head and where the sets of keys, queries, and values are all equal. Note that all weights are positive and that for any fixed $i$ and $t$ the weights $w_{i,j,t}$ sum to~1. For any $t\in\mathbb{N}$ we write $X_t=\{x_{1,t},\ldots,x_{n,t}\}$. Equation~\eqref{eq:dyn_rep} may be interpreted as a discrete time dynamical system, where successive time steps correspond to the forward passes through composed attention blocks. 

\begin{lemma}\label{lem:convex_hull}
For the dynamical system described in (\ref{eq:dyn_rep}) the convex hull of $X_{t+1}$ is contained in the convex hull of $X_t$.
\end{lemma}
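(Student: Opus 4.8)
The statement to prove is that the convex hull of $X_{t+1}$ is contained in the convex hull of $X_t$, where $X_{t+1}$ is obtained from $X_t$ via the self-attention update in equation~\eqref{eq:dyn_rep}.

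The key structural observation is already supplied in the text immediately above the lemma: for each fixed $i$ and $t$, the attention weights $w_{i,j,t}$ are nonnegative and satisfy $\sum_{j=1}^n w_{i,j,t} = 1$. This is exactly the defining property of a convex combination, which comes for free from the softmax normalization. So the entire proof reduces to the definitional fact that each new point $x_{i,t+1} = \sum_{j=1}^n w_{i,j,t}\, x_{j,t}$ is a convex combination of the points $x_{1,t}, \ldots, x_{n,t}$.

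\begin{proof}[Proof sketch]
The plan is to show each new point lies in $\operatorname{conv}(X_t)$, and then invoke convexity of the hull to conclude the same for the whole hull of $X_{t+1}$.

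\textbf{Step 1.} First I would recall that $\operatorname{conv}(X_t)$, the convex hull of the finite set $X_t = \{x_{1,t}, \ldots, x_{n,t}\}$, is precisely the set of all convex combinations $\sum_{j=1}^n \alpha_j x_{j,t}$ with $\alpha_j \ge 0$ and $\sum_j \alpha_j = 1$.

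\textbf{Step 2.} Fix any index $i$. By the normalization property of the softmax stated just before the lemma, the weights $(w_{i,j,t})_{j=1}^n$ are nonnegative and sum to $1$. Hence $x_{i,t+1} = \sum_{j=1}^n w_{i,j,t}\, x_{j,t}$ is a convex combination of the elements of $X_t$, so $x_{i,t+1} \in \operatorname{conv}(X_t)$. Since $i$ was arbitrary, every point of $X_{t+1}$ lies in $\operatorname{conv}(X_t)$, i.e. $X_{t+1} \subseteq \operatorname{conv}(X_t)$.

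\textbf{Step 3.} Finally, $\operatorname{conv}(X_{t+1})$ is the smallest convex set containing $X_{t+1}$. Because $\operatorname{conv}(X_t)$ is itself convex and contains $X_{t+1}$ by Step 2, minimality gives $\operatorname{conv}(X_{t+1}) \subseteq \operatorname{conv}(X_t)$, as claimed.
\end{proof}

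I do not expect any genuine obstacle here: the result is essentially immediate once the softmax normalization is recorded, and the only thing to be careful about is cleanly separating the pointwise containment $X_{t+1} \subseteq \operatorname{conv}(X_t)$ from the hull containment, invoking the minimality (or equivalently, idempotence) of the convex-hull operation for the last inclusion. It is worth emphasizing that this argument is completely independent of the choice of compatibility function, since it uses only that the resulting row-weights are normalized and nonnegative; this is why the text can assert the claim for \emph{any} compatibility function. The strict shrinkage of the diameter mentioned in the surrounding discussion is a separate, quantitative refinement and is not needed for this containment statement.
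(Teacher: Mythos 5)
Your proof is correct and follows essentially the same route as the paper's: the paper's two-sentence argument observes that each term of $X_{t+1}$ is a convex combination of terms of $X_t$ and that the convex hull is closed under convex combinations, which is exactly your Steps 2 and 3 spelled out in more detail. No gaps; your explicit separation of the pointwise containment from the hull containment is just a more careful writing of the same idea.
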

\begin{proof}
Equation \eqref{eq:dyn_rep} gives each term in $X_{t+1}$ as a convex combination of terms in $X_t$. The result follows since a convex hull is closed under taking convex combinations of any of its elements. 
\end{proof}

\begin{lemma}\label{lem:diam}
For any positive lower bound $\delta_t$ on the weights $w_{i,j,t}$ at time step $t$, the diameter of the set of points decreases as
\begin{displaymath}
\textrm{diam}(X_{t_1}) \le (1 - 2\delta_t)\textrm{diam}(X_t).
\end{displaymath}
\end{lemma}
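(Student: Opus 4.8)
The plan is to control $\textrm{diam}(X_{t+1})$ by bounding the distance $\|x_{i,t+1}-x_{k,t+1}\|$ for an arbitrary pair of indices $i,k$ and then maximising over pairs. Writing out the recurrence~\eqref{eq:dyn_rep}, the difference of two updated points is
\begin{displaymath}
x_{i,t+1}-x_{k,t+1}=\sum_{j=1}^{n}(w_{i,j,t}-w_{k,j,t})\,x_{j,t},
\end{displaymath}
a linear combination whose coefficients $a_j:=w_{i,j,t}-w_{k,j,t}$ sum to zero, since each row of weights sums to~$1$. This zero-sum structure is what will let me rewrite the difference in terms of the geometry of $X_t$.

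First I would split the indices into $P=\{j:a_j>0\}$ and $N=\{j:a_j<0\}$ and set $S=\sum_{j\in P}a_j=-\sum_{j\in N}a_j=\tfrac12\sum_j|a_j|$. If $S=0$ all coefficients vanish and the two updated points coincide, so the bound holds trivially; otherwise $P$ and $N$ are both nonempty and I can factor
\begin{displaymath}
x_{i,t+1}-x_{k,t+1}=S\Big(\sum_{j\in P}\tfrac{a_j}{S}x_{j,t}-\sum_{j\in N}\tfrac{-a_j}{S}x_{j,t}\Big)=S\,(p-q),
\end{displaymath}
where $p$ and $q$ are genuine convex combinations of points of $X_t$. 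Expanding $p-q=\sum_{j\in P}\sum_{\ell\in N}\tfrac{a_j}{S}\tfrac{-a_\ell}{S}(x_{j,t}-x_{\ell,t})$ as a convex combination of difference vectors and applying the triangle inequality yields $\|p-q\|\le\textrm{diam}(X_t)$; equivalently one can invoke that the diameter of a convex hull equals that of the generating point set, in the spirit of Lemma~\ref{lem:convex_hull}.

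It remains to establish $S\le 1-2\delta_t$, which I expect to be the crux. Using that every weight is at least $\delta_t$ together with the nonemptiness of $P$ and $N$, I would argue that $\sum_{j\in P}w_{i,j,t}=1-\sum_{j\notin P}w_{i,j,t}\le 1-\sum_{j\in N}w_{i,j,t}\le 1-\delta_t$, while $\sum_{j\in P}w_{k,j,t}\ge\delta_t$. Subtracting gives
\begin{displaymath}
S=\sum_{j\in P}w_{i,j,t}-\sum_{j\in P}w_{k,j,t}\le(1-\delta_t)-\delta_t=1-2\delta_t.
\end{displaymath}
Combining $\|x_{i,t+1}-x_{k,t+1}\|=S\,\|p-q\|\le(1-2\delta_t)\textrm{diam}(X_t)$ and maximising over $i,k$ then finishes the proof. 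The subtle point to treat with care is exactly the nonemptiness of both $P$ and $N$ (guaranteed as soon as $S>0$), because it is the \emph{two} separate $\delta_t$ lower bounds---one for the mass that row $k$ places on $P$ and one for the mass that row $i$ places on $N$---that combine to produce the factor $2\delta_t$ rather than a single $\delta_t$.
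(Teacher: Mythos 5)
Your proof is correct, but it takes a genuinely different route from the paper's. The paper argues via one-dimensional projections: for an arbitrary line it shows $\pi(X_{t+1})\subseteq[\delta_t d_{\pi,t},(1-\delta_t)d_{\pi,t}]$ by examining extremal alignments of the projected points, and then specializes to the line through a diameter-realizing pair of $X_{t+1}$. You instead work directly with the difference $x_{i,t+1}-x_{k,t+1}=\sum_j(w_{i,j,t}-w_{k,j,t})x_{j,t}$ and exploit that the coefficient vector sums to zero; your quantity $S=\tfrac12\sum_j|w_{i,j,t}-w_{k,j,t}|$ is the total-variation distance between the two weight rows, so your argument is in essence the classical Doeblin--Dobrushin contraction-coefficient estimate for stochastic matrices. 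All steps check out: $P$ and $N$ are both nonempty once $S>0$, the rewriting of $p-q$ as a convex combination of differences $x_{j,t}-x_{\ell,t}$ is valid, and the two separate $\delta_t$ lower bounds are correctly combined to give $S\le 1-2\delta_t$. Beyond avoiding the paper's slightly delicate ``any other alignment is less extreme'' step, your route buys a strictly stronger conclusion at the cost of one more line: since $|a-b|=a+b-2\min(a,b)$, you get $S=1-\sum_j\min(w_{i,j,t},w_{k,j,t})\le 1-n\delta_t$, hence $\textrm{diam}(X_{t+1})\le(1-n\delta_t)\,\textrm{diam}(X_t)$, which improves both the stated bound and the $(1-n\delta_t/4)$ bound the paper attributes to \citet{linderman2019clustering}. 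The paper's projection argument, in exchange, yields the slightly more geometric intermediate statement that every one-dimensional shadow of the point cloud contracts by $\delta_t d_{\pi,t}$ at each end.
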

\begin{proof}
Let $\pi$ be orthogonal projection onto any line in $\mathbb{R}^{d_x}$. Under repara\-metrization of the line, we may assume $\pi(X_t)\subseteq [0,d_{\pi,t}]$ to hold such that neither $\pi^{-1}(0)$ nor $\pi^{-1}(d_{\pi,t})$ are empty. 
Consider the extremal situation with $\# \pi^{-1}(0) = n-1$ and $\#\pi^{-1}(d_{\pi,t})=1$. Without loss of generality we write $\pi(x_{1,t})=d_{\pi,t}$. For any positive lower bound $\delta_t$ on the weights $w_{i,j,t}$ and by linearity of $\pi$, we conclude that we have
\begin{displaymath}
\pi(x_{i,t+1}) = \sum_{j=1}^n w_{i,j,t}\pi(x_{j,t}) = w_{i,1,t}d_{\pi,t} \geq \delta_t d_{\pi,t}.
\end{displaymath}
For the reverse extremal situation with $\# \pi^{-1}(0) = 1$ and $\#\pi^{-1}(d_{\pi,t})=n-1$, we write without loss of generality $\pi(x_{1,t})=0$. Analogous to before we conclude
\begin{displaymath}
\pi(x_{i,t+1}) \leq (1-\delta_t)d_{\pi,t}
\end{displaymath}
by the fact that for any $i$, the sum $\sum_{j=2}^n w_{i,j,t}$ is bounded above by $1-\delta_t$.

Any other alignment of the projection images is less extreme, giving rise to 
\begin{equation}\label{eq:interval_inclusion}
\pi(X_{t+1}) \subseteq [\delta_t d_{\pi,t}, (1-\delta_t)d_{\pi,t}].
\end{equation}

The above holds for any projection, so in particular we have $d_{\pi,t}\leq \textrm{diam}(X_t)$. Now consider two points in $X_{t+1}$ with maximal distance $\textrm{diam}(X_{t+1})$, and in particular consider the projection $\pi$ onto the line defined by these two points. Then we have
\begin{displaymath}
\textrm{diam}(X_{t+1}) = \textrm{diam}(\pi(X_{t+1})) \leq (1-2\delta_t)\textrm{diam}(X_t),
\end{displaymath}
having used equation~\eqref{eq:interval_inclusion} for the inequality.
\end{proof}

Note that the above proof by considering extremal situations may initially seem to be at odds with the stronger result of
\begin{displaymath}
\textrm{diam}(X_{t+1}) \leq \left(1 - \frac{n\delta_t}{4}\right) \textrm{diam}(X_t)
\end{displaymath}
that may be derived following arguments in~\citet{linderman2019clustering}. This apparent paradox is resolved by realizing that the two extremal situations we describe can never occur simultaneously unless $n=2$ holds. In that particular situation, our bound is better.

As mentioned in Section~\ref{sec:analysis}, Lemmas~\ref{lem:convex_hull} and~\ref{lem:diam} together imply that if the Transformer architecture would not include skip-connections, then the diameter of the set of input vectors would shrink with each successive attention block. How skip-connections counteract this specifically for our clustering purposes is further discussed in Appendix~\ref{subsec:appendix_skip_connections}.

\subsection{Formal treatment of Proposition \ref{prop:two_clusters_informally}}\label{subsec:appendix_formal_example}

In this section we will analyse an extension of equation~\eqref{eq:dyn_rep} to also include skip-connections, after which we will specify to clustering in Corollary~\ref{cor:fast_and_slow}.

Let $n$ and $m$ be two positive integers. We will write $I_A=\{1,\ldots,n\}$ and $I_B=\{n+1,\ldots,n+m\}$. Consider the discrete time dynamical system on a set of points $x_{i,t}\in \mathbb{R}^d$ for $i\in I_A\cup I_B$, $t\in\mathbb{N}$ and some $d\geq 0$, given by the update rule

\begin{equation}\label{eq:formal_eg_dyn_sys}
\Delta x_{i,t+1} := x_{i,t+1} - x_{i,t} = \sum_{j\in I_A\cup I_B}w_{i,j,t}x_{j,t}
\end{equation}

under the following assumptions:
\begin{align*}
w_{i,j,t} = \alpha_t>0 &\textrm{ for }i,j\in I_A,\ i\neq j,\\
w_{i,j,t} = \beta_t>0 &\textrm{ for }i,j\in I_B,\ i\neq j,\\
w_{i,j,t} = \gamma_t>0 &\textrm{ for }i\in I_A,\ j\in I_B,\\
w_{i,i,t} = \delta_t>0 &\textrm{ for }i\in I_A\cup I_B.
\end{align*}

Assume for any $i\in I_A\cup I_B$ and $t\in\mathbb{N}$ moreover
\begin{equation}\label{eq:sum_one}
\sum_{j\in I_A\cup I_B} w_{i,j,t} = 1.
\end{equation}

Notice that this is the setup as described informally in Proposition \ref{prop:two_clusters_informally}, for the two clusters given by $A=\{x_{i,0}:i\in I_A\}$ and $B=\{x_{i,0}:i\in I_B\}$. The use of skip-connections is visible in equation~\eqref{eq:formal_eg_dyn_sys} yielding $\Delta x_{i,t+1}$ rather than $x_{i,t+1}$ itself.

We will write
\begin{displaymath}
c_{p,t} = \frac{1}{\# I_p} \sum_{i\in I_p} x_{i,t} \textrm{ for } p=A,B
\end{displaymath}
for the centroids of the two clusters.

We will assume $\delta_t> \max\{\alpha_t,\beta_t\}$ for all $t\in\mathbb{N}$. This assumption is natural in our application domain of similarity scores, and it will in fact be necessary in Corollary \ref{cor:fast_and_slow}. While not strictly necessary for the proof of Proposition \ref{prop:two_clusters} itself, we already assume it now so that the quantities involved in the statement of the proposition are non-negative.

\begin{proposition}\label{prop:two_clusters}
Using the notation and assumptions outlined above, the following statements hold:
\begin{enumerate}
	\item\label{enum:two_clusters1} For all $i,j\in I_A$ and $t\in\mathbb{N}$ we have $x_{i,t+1} - x_{j,t+1} = (1+\delta_t - \alpha_t)(x_{i,t}-x_{j,t})$.
	\item\label{enum:two_clusters2} For all $i,j\in I_B$ and $t\in\mathbb{N}$ we have $x_{i,t+1} - x_{j,t+1} = (1+\delta_t - \beta_t)(x_{i,t}-x_{j,t})$.
	\item\label{enum:two_clusters3} For all $t\in\mathbb{N}$ we have $c_{1,t+1} - c_{2,t+1} = (2-(n+m)\gamma_t)(c_{1,t} - c_{2,t})$.
\end{enumerate}
\end{proposition}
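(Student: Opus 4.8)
The plan is to compute each updated point $x_{i,t+1}$ explicitly from the update rule~\eqref{eq:formal_eg_dyn_sys} and to observe that, within each block, the part of the update that mixes across points is common to all members of the block and therefore cancels in pairwise differences. First I would abbreviate the block sums $S_{A,t}=\sum_{k\in I_A}x_{k,t}$ and $S_{B,t}=\sum_{k\in I_B}x_{k,t}$. For a fixed $i\in I_A$, splitting the sum $\sum_j w_{i,j,t}x_{j,t}$ according to whether $j=i$, $j\in I_A\setminus\{i\}$, or $j\in I_B$, and using the assumed constant weights, gives
\begin{displaymath}
x_{i,t+1}=(1+\delta_t-\alpha_t)\,x_{i,t}+\alpha_t S_{A,t}+\gamma_t S_{B,t},
\end{displaymath}
where the skip-connection contributes the leading $x_{i,t}$ and the $(\delta_t-\alpha_t)$ arises from separating the diagonal weight $\delta_t$ from the intra-block weight $\alpha_t$ on the $j=i$ term. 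The aggregate term $\alpha_t S_{A,t}+\gamma_t S_{B,t}$ does not depend on $i$, so subtracting the analogous expression for $j\in I_A$ proves statement~\ref{enum:two_clusters1}; statement~\ref{enum:two_clusters2} follows by the identical computation with $A,\alpha_t$ replaced by $B,\beta_t$.

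For statement~\ref{enum:two_clusters3} I would average the per-point update over a block. Summing the displayed identity over $i\in I_A$ and dividing by $n=\#I_A$ yields, after substituting $S_{A,t}=n\,c_{A,t}$ and $S_{B,t}=m\,c_{B,t}$,
\begin{displaymath}
c_{A,t+1}=\bigl(1+\delta_t+(n-1)\alpha_t\bigr)c_{A,t}+m\gamma_t\,c_{B,t}.
\end{displaymath}
The crucial step is to invoke the normalization~\eqref{eq:sum_one}, which for a row $i\in I_A$ reads $\delta_t+(n-1)\alpha_t+m\gamma_t=1$; this collapses the coefficient of $c_{A,t}$ to $2-m\gamma_t$. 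An identical argument over $I_B$, using the row sum $\delta_t+(m-1)\beta_t+n\gamma_t=1$, gives $c_{B,t+1}=(2-n\gamma_t)c_{B,t}+n\gamma_t\,c_{A,t}$. Subtracting the two centroid recurrences, the cross terms recombine so that both $c_{A,t}$ and $c_{B,t}$ acquire the common factor $2-(n+m)\gamma_t$, yielding statement~\ref{enum:two_clusters3} under the identification $c_1=c_A$, $c_2=c_B$.

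The computations are elementary bookkeeping, so the only real subtlety — which I would flag as the key idea rather than an obstacle — is the use of the row-sum constraint~\eqref{eq:sum_one} to rewrite $1+\delta_t+(n-1)\alpha_t$ as $2-m\gamma_t$. Without it one is left with separate $\alpha_t,\beta_t,\delta_t$ dependence and no clean common factor; with it, the intra-block dynamics (governed by $1+\delta_t-\alpha_t$ and $1+\delta_t-\beta_t$) decouple cleanly from the inter-centroid dynamics (governed by $2-(n+m)\gamma_t$). I would also state explicitly at the outset the symmetry convention $w_{i,j,t}=\gamma_t$ for $i\in I_B,\,j\in I_A$, since the assumptions as listed only specify the $I_A\to I_B$ direction; this symmetry is precisely what makes the two centroid recurrences combine into a single scalar multiple of $c_{A,t}-c_{B,t}$.
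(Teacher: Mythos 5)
Your proposal is correct and follows essentially the same elementary computation as the paper: exploit the constant block weights so that the cross-block contribution is common to all members of a block and cancels in within-block differences, then use the row-sum normalization to collapse the centroid coefficients to $2-(n+m)\gamma_t$. The only organizational difference is in part~3, where the paper sums the pairwise differences $x_{i,t+1}-x_{j,t+1}$ over all $nm$ pairs with a counting argument while you derive each centroid's recurrence separately and subtract (a marginally cleaner bookkeeping that also records the individual centroid dynamics), and your remark that the symmetry $w_{j,i,t}=\gamma_t$ for $j\in I_B$, $i\in I_A$ should be made explicit is a fair one, since the paper uses it without stating it.
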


Note before we start the proof itself, that expanding \eqref{eq:sum_one} for $i\in I_A$ and $i\in I_B$ separately gives relations between the different weights:
\begin{equation}\label{eq:observation}
\begin{split}
\delta_t + (n-1)\alpha_t + m\gamma_t &=1, \textrm{ and}\\
\delta_t + (m-1)\beta_t + n\gamma_t &=1.
\end{split}
\end{equation}

\begin{proof}[Proof of Proposition \ref{prop:two_clusters}]
The proofs of parts \ref{enum:two_clusters1} and \ref{enum:two_clusters2} are identical up to switching the roles of $I_A$ and $I_B$, so we merely give the former, which is by simple computation. For $i,j\in I_A$ we have
\begin{displaymath}
\Delta x_{i,t+1} - \Delta x_{j,t+1} = \sum_{\ell\in I_A} w_{i,\ell,t}x_{\ell,t} + \sum_{\ell\in I_B} w_{i,\ell,t}x_{\ell,t} - \sum_{\ell\in I_A} w_{j,\ell,t}x_{\ell,t} - \sum_{\ell\in I_B} w_{j,\ell,t}x_{\ell,t}.
\end{displaymath}
Notice that the second and fourth sum both equal $\gamma_t\sum_{\ell\in I_B}x_{\ell,t}$. As they have opposite signs, these two sums disappear from the overall expression. Similarly, each term in the first and third sum that corresponds to some $\ell\in I_A\setminus\{i,j\}$ occurs with opposite signs in the overall expression and hence disappears. Therefore we arrive at
\begin{displaymath}
\Delta x_{i,t+1} - \Delta x_{j,t+1} = w_{i,i,t}x_{i,t} + w_{i,j,t}x_{j,t} - w_{j,i,t}x_{i,t} - w_{j,j,t}x_{j,t},
\end{displaymath}
which equals $(\delta_t-\alpha_t)x_{i,t} + (\alpha_t-\delta_t)x_{j,t} = (\delta_t-\alpha_t)(x_{i,t}-x_{j,t})$. Retrieval of the statement of the proposition follows by expanding $\Delta x_{i,t+1} = x_{i,t+1}-x_{i,t}$, giving rise to the additional $1$ inside the parentheses.

For the proof of part \ref{enum:two_clusters3} we notice that we may write 
\begin{equation}\label{eq:centroid_difference}
c_{1,t+1} - c_{2,t+1} = \frac{1}{nm}\sum_{i\in I_A,j\in I_B}x_{i,t+1}-x_{j,t+1}
\end{equation}
for all $t\in\mathbb{N}$, so we first study the individual differences $x_{i,t+1}-x_{j,t+1}$ for $i\in I_A$ and $j\in I_B$. 

Again, straightforward computation yields
\begin{align*}
\Delta x_{i,t+1} - \Delta x_{j,t+1} =& \sum_{\ell\in I_A} \left(w_{i,\ell,t} - w_{j,\ell,t}\right)x_{\ell,t} + \sum_{k\in I_B}\left(w_{i,k,t}-w_{j,k,t}\right)x_{k,t}\\
=& (\delta_t-\gamma_t) x_{i,t} + \sum_{i\neq\ell\in I_A}(\alpha_t-\gamma_t)x_{\ell,t}\\ 
&\ + (\gamma_t- \delta_t) x_{j,t} + \sum_{j\neq k\in I_B}(\gamma_t - \beta_t)x_{k,t}\\
=& (\delta_t-\gamma_t)(x_{i,t} - x_{j,t})\\
&\ + \sum_{i\neq\ell\in I_A}(\alpha_t-\gamma_t)x_{\ell,t} - \sum_{j\neq k\in I_B}(\beta_t - \gamma_t)x_{k,t}
\end{align*}
and substitution into \eqref{eq:centroid_difference} together with expansion of $\Delta x_{i,t+1}$ allows us to write
\begin{align*}
c_{1,t+1} - c_{2,t+1} =& (1+\delta_t-\gamma_t)(c_{1,t}-c_{2,t})\\ 
&\ + \frac{1}{mn}\sum_{i\in I_A,j\in I_B}\left(\sum_{i\neq\ell\in I_A}(\alpha_t-\gamma_t)x_{\ell,t} - \sum_{j\neq k\in I_B}(\beta_t - \gamma_t)x_{k,t}\right).
\end{align*}
Let us investigate the double sum here. Each term involving $x_{\ell,t}$ for $\ell\in I_A$ occurs $m(n-1)$ times since for any fixed $j\in I_B$, among the $n$ outer terms involving $i\in I_A$, it happens exactly once that there is no term involving $x_{\ell,t}$. Similarly for the terms involving $x_{k,t}$ for $k\in I_B$, which each occur $n(m-1)$ times. Hence the double sum equals
\begin{displaymath}
m(n-1)(\alpha_t-\gamma_t)\sum_{i\in I_A}x_{i,t} - n(m-1)(\beta_t-\gamma_t)\sum_{j\in I_B}x_{j,t}.
\end{displaymath}
Accounting for the factor $\tfrac{1}{nm}$ and reinserting the definition of $c_{1,t}$ and $c_{2,t}$ we arrive at
\begin{displaymath}
c_{1,t+1} - c_{2,t+1} = \left(1+\delta_t + (n-1)\alpha_t - n\gamma_t\right)c_{1,t} - \left(1+\delta_t + (m-1)\beta_t - n\gamma_t\right)c_{2,t}.
\end{displaymath}
To finalize the proof we make use of our earlier observation from \eqref{eq:observation} that allows us to recognize that the coefficients for $c_{1,t}$ and $c_{2,t}$ in the last line are in fact equal (up to sign) and have the values $\pm (2-(n+m)\gamma_t)$. 
\end{proof}

The proposition above does not yet include one of the assumptions that were outlined in the informal statement, namely that the weights within either cluster are larger than the weights between clusters, i.e. $\gamma_t < \min\{\alpha_t, \beta_t\}$. Adding this assumption to the formalism leads us to the following corollary.

\begin{cor}\label{cor:fast_and_slow}
For any $t\in\mathbb{N}$, if $\alpha_t>\gamma_t$ holds, then at time $t$ the diameter of $\{x_{i,t}:i\in I_A\}$ expands at a slower rate than the rate at which the centroids $c_{A,t}$ and $c_{B,t}$ are pushed apart. Moreover, the same statement holds when replacing $\alpha_t$ by $\beta_t$ and $I_A$ by $I_B$.
\end{cor}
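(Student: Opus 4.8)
The plan is to read the word ``rate'' as the per-step multiplicative factor by which each quantity grows, and then to compare the factor governing the within-cluster diameter against the factor governing the centroid separation. Two of these factors are handed to us directly by Proposition~\ref{prop:two_clusters}: part~\ref{enum:two_clusters1} shows that every within-$A$ difference vector $x_{i,t}-x_{j,t}$ is scaled by the common scalar $1+\delta_t-\alpha_t$, and part~\ref{enum:two_clusters3} shows that $c_{A,t}-c_{B,t}$ is scaled by $2-(n+m)\gamma_t$.

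First I would argue that the diameter of $\{x_{i,t}:i\in I_A\}$ scales by exactly $r_A:=1+\delta_t-\alpha_t$. The assumption $\delta_t>\max\{\alpha_t,\beta_t\}$ guarantees $r_A>1>0$, so $r_A$ is a positive scalar multiplying every pairwise difference inside $A$ uniformly; hence each pairwise distance $\|x_{i,t}-x_{j,t}\|$ is multiplied by $r_A$, and taking the maximum over pairs shows the diameter itself is multiplied by $r_A$. Next I would record the centroid factor $r_C:=2-(n+m)\gamma_t$ and verify it is positive: substituting both relations of~\eqref{eq:observation} gives $r_C=2\delta_t+(n-1)\alpha_t+(m-1)\beta_t$, which is positive since $\delta_t,\alpha_t,\beta_t>0$. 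Thus both factors are genuine positive growth rates, and the claim that $A$ expands slower than the centroids separate reduces to the single inequality $r_A<r_C$.

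The linchpin is then a one-line algebraic comparison made possible by~\eqref{eq:observation}. Using $\delta_t=1-(n-1)\alpha_t-m\gamma_t$ from the first relation, I would rewrite $r_A=1+\delta_t-\alpha_t=2-n\alpha_t-m\gamma_t$, placing it in the same shape as $r_C=2-n\gamma_t-m\gamma_t$. Subtracting gives $r_C-r_A=n(\alpha_t-\gamma_t)$, which is strictly positive precisely under the hypothesis $\alpha_t>\gamma_t$, yielding $r_A<r_C$ as required. The $B$-cluster statement follows by the identical computation using part~\ref{enum:two_clusters2} and the second relation of~\eqref{eq:observation}, producing $r_C-r_B=m(\beta_t-\gamma_t)>0$ whenever $\beta_t>\gamma_t$.

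I do not expect a serious obstacle: the substance of the corollary is already packaged in Proposition~\ref{prop:two_clusters}, and the only genuine care needed is (i) justifying that a uniform positive scaling of all difference vectors scales the diameter by the same factor, and (ii) exploiting the weight-sum constraints~\eqref{eq:observation} to bring $r_A$ and $r_C$ into a directly comparable form so that the difference collapses to $n(\alpha_t-\gamma_t)$. Neither is hard, but both are the points at which the argument could silently fail, for instance if $r_A$ were permitted to be negative (excluded by $\delta_t>\alpha_t$) or if ``rate'' were interpreted additively rather than multiplicatively.
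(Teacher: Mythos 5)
Your proposal is correct and follows essentially the same route as the paper: both use the relations in~\eqref{eq:observation} to rewrite $1+\delta_t-\alpha_t$ as $2-n\alpha_t-m\gamma_t$ and compare it with $2-(n+m)\gamma_t=2-n\gamma_t-m\gamma_t$, the paper via their ratio being below $1$ and you via the difference $n(\alpha_t-\gamma_t)>0$, which is a cosmetic distinction. Your extra checks (positivity of the centroid rate and that uniform scaling of all difference vectors scales the diameter) are sound and only make explicit what the paper leaves implicit.
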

\begin{proof}
We will only give the proof for the former statement. The proof of the latter statement is identical after performing the symbolic replacement as indicated.

The rates mentioned in the corollary are $1+\delta_t-\alpha_t$ and $2-(n+m)\gamma_t$ respectively. Their ratio equals
\begin{displaymath}
\frac{1+\delta_t-\alpha_t}{2-(n+m)\gamma_t} = \frac{2 - n\alpha_t - m\gamma_t}{2-n\gamma_t-m\gamma_t},
\end{displaymath}
which is evidently smaller than 1 in case $\alpha_t>\gamma_t$ holds. Moreover, both rates are strictly lower bounded by 1, so the respective diameters grow and so does the separation between the cluster centroids.
\end{proof}

\subsection{The use of skip-connections}\label{subsec:appendix_skip_connections}
As noted in Section~\ref{sec:analysis}, the skip-connections serve a specific purpose in the Set Transformer architecture, which we discuss in a little more detail here. We will focus specifically on their use in the proofs of Proposition~\ref{prop:two_clusters} and Corollary~\ref{cor:fast_and_slow}. 

Without skip-connections, equation~\eqref{eq:formal_eg_dyn_sys} becomes
\begin{displaymath}
x_{i,t+1} = \sum_{j\in I_A\cup I_B} w_{i,j,t}x_{j,t}
\end{displaymath}
and the statement of Proposition~\ref{prop:two_clusters} would be modified. The multiplication factors $1+\delta_t-\alpha_t$ and $1+\delta_t-\beta_t$ from the first and second statements and $2-(n+m)\gamma_t$ from the third statement would each decrease by 1. This would mean that these factors would fall into the interval $(0,1)$ and each encoder block would operate in a contractive way. While the result of Corollary~\ref{cor:fast_and_slow} would remain morally correct -- each cluster would contract faster than the rate at which the cluster centroids would come together -- this would complicate training a network containing multiple stacked encoder blocks. 

\newpage
\section{More details on the sampling procedure}\label{sec:appendix_sampling}

Given a classification dataset containing a collection of examples with corresponding class labels, we briefly outline a general procedure to synthesize an ABC-ready dataset. A single input instance is independently generated using the procedure outlined in Algorithm~\ref{alg:gen}.

\begin{algorithm}[ht]
\caption{Generating a cluster instance from a classification dataset}
\label{alg:gen}

\SetKwInOut{Input}{input}
\SetKwInOut{Constraint}{constraint}
\SetKwInOut{Output}{output}
\SetKwFunction{Uniform}{uniform}
\SetKwFunction{Min}{min}
\SetKwFunction{Return}{return}

\Input{desired length of output sequence $L$}
\Constraint{number of classes $C$,  number of available examples per class $b_1, \ldots, b_C$}
\Output{length $L$ sequence of examples, kernel matrix of size $L \times L$, number of clusters present}
\BlankLine
Initialize $O \gets [\; ]$\\
Pick $k \gets$ \Uniform{$1,$ \Min{$C, L$}}\\
Pick per cluster frequencies $n_1, \cdots, n_k$ with $1 \le n_i \le b_i$ and $\sum_{i=1}^{k} n_i = L$\\
\For{$i \gets 1$ \KwTo $k$}{
    Pick a class not yet chosen uniformly at random\\
    append $n_i$ uniform examples of chosen class to $O$\\
}
Let $A \gets$ true kernel matrix corresponding to $O$\\
\Return{$O, A, k$}
\end{algorithm}

\newpage
\section{More details on Omniglot results}\label{sec:appendix_omniglot_details}
\subsection{Details of experimental setup}\label{subsec:appendix_experiment_hyperparams}
The results discussed in Section~\ref{sec:experiments} and shown in this Appendix are produced with the following hyperparameters: the embedding component uses two Self Attention Blocks (SAB), each with four heads. The dimensionality of keys, queries, and values is set to $128$. The learning rate is set to $0.001$. We found that using larger batch sizes of up to 128 tends to improve training.

\subsection{Normalized mutual information per alphabet}\label{subsec:appendix_nmi_per_alphabet}

In Table~\ref{tab:omnitest} we show more details on Omniglot testing results, split out per alphabet. The averages at the bottom of the table are reported in the main body of this paper in Table~\ref{tab:ABC_results}.
\begin{table}[ht]
    \centering
    \caption{Average NMI scores for 1000 random instances, each of size 100, for each alphabet in the evaluation set. The number of clusters varies uniformly up to the maximum available for each alphabet, which is 47 for Malayalam. `Mul' refers to multiplicative attention, while `Add' means ABC with additive attention.}
    \begin{tabular}{|l||c|c|c|c|c|c|}
        \hline
        \multirow{2}{*}{Alphabet} & \multicolumn{2}{c|}{$k \in [5, 47]$  (unk.)} & \multicolumn{2}{c|}{$k \in [5, 47]$ (known)} & \multicolumn{2}{c|}{$k = 20$ (known)} \\
        \cline{2-7}
        & Mul & Add & Mul & Add & Mul & Add \\
        \hline\hline
        Angelic & 0.8944 & 0.8566 & 0.8977 & 0.8757 & 0.8593 & 0.8435 \\
        Atemayar\textunderscore Qel. & 0.8399 & 0.8003 & 0.8761 & 0.8570 & 0.8692 & 0.8315 \\
        Atlantean & 0.9182 & 0.8927 & 0.9272 & 0.9188 & 0.9104 & 0.8994 \\
        Aurek-Besh & 0.9371 & 0.9247 & 0.9444 & 0.9312 & 0.9367 & 0.9247 \\
        Avesta & 0.9011 & 0.8728 & 0.9067 & 0.8956 & 0.8939 & 0.8733 \\
        Ge\textunderscore ez & 0.8877 & 0.8833 & 0.8931 & 0.8943 & 0.8725 & 0.8864 \\
        Glagolitic & 0.9046 & 0.8366 & 0.9186 & 0.8965 & 0.9158 & 0.8943 \\
        Gurmukhi & 0.8685 & 0.7999 & 0.8949 & 0.8668 & 0.9018 & 0.8674 \\
        Kannada & 0.8120 & 0.6837 & 0.8545 & 0.8267 & 0.8648 & 0.8225 \\
        Keble & 0.8671 & 0.8195 & 0.8921 & 0.8623 & 0.9042 & 0.8291 \\
        Malayalam & 0.8810 & 0.8494 & 0.8963 & 0.8869 & 0.8909 & 0.8854 \\
        Manipuri & 0.9035 & 0.8637 & 0.9152 & 0.8948 & 0.9039 & 0.8918 \\
        Mongolian & 0.9200 & 0.8879 & 0.9277 & 0.9143 & 0.9176 & 0.9020 \\
        \small{Old\textunderscore Church...} & 0.9358 & 0.9336 & 0.9419 & 0.9425 & 0.9302 & 0.9372 \\
        Oriya & 0.8008 & 0.6734 & 0.8460 & 0.8019 & 0.8466 & 0.7912 \\
        Sylheti & 0.7725 & 0.6414 & 0.8220 & 0.7923 & 0.8151 & 0.7708 \\
        Syriac\textunderscore (Serto) & 0.8909 & 0.8381 & 0.8946 & 0.8762 & 0.8794 & 0.8535 \\
        Tengwar & 0.8758 & 0.8359 & 0.8872 & 0.8697 & 0.8571 & 0.8524 \\
        Tibetan & 0.8840 & 0.8694 & 0.8996 & 0.8961 & 0.8982 & 0.8935 \\
        ULOG & 0.7895 & 0.5621 & 0.8185 & 0.7656 & 0.8132 & 0.7544 \\
        \hline
        \textbf{mean} & 0.8742 & 0.8163 & 0.8927 & 0.8733 & 0.8840 & 0.8602 \\
        \hline
    \end{tabular}
    \label{tab:omnitest}
\end{table}

\end{document}